\theoremstyle{plain}
\newtheorem{theorem}{Theorem}[section]
\newtheorem{corollary}[theorem]{Corollary}
\newtheorem{assumption}[theorem]{Assumption}
\newtheorem{lemma}[theorem]{Lemma}
\theoremstyle{definition}
\newtheorem{definition}[theorem]{Definition}
\DeclareMathOperator*{\argmax}{arg\,max}
\title{Analysis of Value Iteration Through Absolute Probability Sequences}
\author{%
  Arsenii Mustafin$^*$  \\
  Department of Computer Science\\
  Boston University\\
  Boston, MA 02215, USA \\
  \texttt{aam@bu.edu} \\
  \And
  Sebastien Colla$^*$ \\
  ICTEAM Institute \\
  UCLouvain, 1348 Louvain-la-Neuve, Belgium \\
  sebastien.colla@uclouvain.be \\
   \AND
   Alex Olshevsky \\
   Department of ECE \\
   Boston University, Boston, MA 02215, USA \\
   \texttt{alexols@bu.edu} \\
   \And
   Ioannis Ch. Paschalidis \\
   Department of ECE \\
   Boston University, Boston, MA 02215, USA \\
   \texttt{yannisp@bu.edu} \\
}
\begin{document}

\maketitle
\begin{abstract}
Value Iteration is a widely used algorithm for solving Markov Decision Processes (MDPs). While previous studies have extensively analyzed its convergence properties, they primarily focus on convergence with respect to the infinity norm. In this work, we use absolute probability sequences to develop a new line of analysis and examine the algorithm’s convergence in terms of the $L^2$ norm, offering a new perspective on its behavior and performance.
\end{abstract}

\section{Introduction} \label{sec:intro}

Markov Decision Processes (MDPs) are a foundational mathematical framework for sequential decision-making and have become a cornerstone of modern AI. They can be viewed as an extension of standard Markov Processes (MPs) designed to model sequential decision-making. MDPs were first introduced in the late 1950s, along with a key algorithm for solving them—Value Iteration \citep{bellmandp}. The algorithm's convergence in terms of the infinity norm, at a rate determined by the discount factor $\gamma$ (introduced below), was established in \citep{howard1960dynamic}, where it was also shown that this convergence rate is achievable. Subsequent analyses have focused on particular cases \citep{puterman1990markov, feinberg2014value}.

In this work, we continue this line of research by analyzing the convergence of the Value Iteration algorithm. The key novelty of our approach lies in the analytical framework: we use absolute probability sequences to establish the convergence of Value Iteration in terms of the L² norm. Additionally, we characterize the convergence rate in the case where a unique optimal policy induces a strongly connected graph.

\section{Mathematical setting} \label{sec:setting}

We follow the general discounted infinite-horizon MDP setting from \cite{puterman2014markov} and \cite{sutton2018reinforcement}. We consider an MDP with a finite set of states, denoted as $\mathcal{S}$, with $|\mathcal{S}| = n$, and a finite set of actions, denoted as $\mathcal{A}$, with $|\mathcal{A}| = m$. We assume that each action in this set is available in every state. The expected reward for taking action $a$ in state $s$ is denoted as $r(s,a)$, and the transition probability distribution over the state set is given by $P(\cdot | s,a)$. The discount factor of the MDP is denoted as $\gamma \in (0,1)$.  

A policy $\pi$ defines a choice rule for selecting action $a_s$ in each state $s$, written as $\pi(s) = a_s$. If a policy selects a single action with probability 1, it is called a deterministic policy. In this paper, we consider only deterministic policies. We say that a policy $\pi$ is implied by a values $V$ if  

\begin{equation*}
\pi(s) = \argmax_{a \in \mathcal{A}} \left( r(s,a) + \gamma \sum_{s'} P(s'|s,a)V(s') \right).
\end{equation*}

Given a policy $\pi$, we define its value in state $s$, denoted as $V^\pi(s)$, to be the expected sum of the discounted rewards over an infinite trajectory starting from state $s$:
\begin{equation*}
V^\pi (s) = \mathbb{E} \left[ \sum_{t=1}^{\infty} \gamma^t r_t \right],    
\end{equation*} 
where $r_t$ is the reward received at time step $t$, starting from state $s$, and actions in each state are chosen according to policy $\pi$.

We are interested in finding the optimal policy $\pi^*$ such that:
\[
V^{\pi^*} (s) \geq V^\pi(s), \quad \forall \pi, s.
\]

The value vector produced by $\pi^*$ is denoted as $V^*$ and satisfies:
\[
V^*(s) = r(s,\pi^*(s)) + \gamma \sum_{s'} P(s'|s,\pi^*(s)) V^*(s').
\]

The Value Iteration algorithm solves this problem by computing a sequence $\{V_t\}$ with an arbitrarily chosen initial value $V_0 \in \mathbb{R}^n$. The sequence is generated by the update:

\begin{equation} \label{eq:vi_update}
V_{t+1} = (1-\alpha)V_t + \alpha \max_{a \in \mathcal{A}} \left( r(s,a) + \gamma \sum_{s'}P(s'|s,a)V_t(s') \right),
\end{equation}
where $\alpha \in (0,1)$ is a chosen learning rate. Here and throughout, the subscript $\square_t$ denotes quantities obtained after $t$ updates using \eqref{eq:vi_update}. In this case. vector or matrix elements, indices are given in square brackets ($[i]$ or $[i,j]$); otherwise when timestep is not important, they are written as subscripts ($\square_i$ or $\square_{i,j}$).

It was shown in \cite{howard1960dynamic} that the update \eqref{eq:vi_update} is a contraction, with $V^*$ as its unique stationary point, implying that the sequence $\{V_t\}$ converges to $V^*$ in terms of the infinity norm.

In this paper, we propose a different proof technique and show that this sequence converges to $V^*$ in terms of the weighted $L^2$ norm. To avoid the worst-case scenario, we make the following key assumption:

\begin{assumption} \label{as:ergodicity}
The unique optimal policy $\pi^*$ induces an irreducible and aperiodic Markov process.
\end{assumption}

\section{Our Approach}
Our approach focuses on the analysis of error vectors:
\[
e_t = V_t - V^*.
\]
The first simple observation we make is that any consensus vector $e_t$, \textit{i.e.}, $e_t = c\mathbf{1}$, implies the optimal policy. This allows us to focus on analyzing the convergence of $e_t$ to consensus, which does not necessarily imply convergence to zero.

The update \eqref{eq:vi_update} can be written in vector form as:
\begin{equation*}
V_{t+1} = (1-\alpha)V_t + \alpha (r_t + \gamma P_t V_t),
\end{equation*}
where the choices made by the maximization operator are encoded in the reward vector $r_t$ and the transition matrix $P_t$. We denote the rewards and transitions associated with the optimal value vector $V^*$ as $r^*$ and $P^*$. Using the maximization property of these choices, we derive the following bounds on $e_t$. 

\textbf{Upper bound:}
\begin{align}
e_{t+1} &= V_{t+1} - V^* = (1-\alpha)e_t + 
\alpha \left[r_t + \gamma P_t V_t - r^* - \gamma P^* V^* \right] \nonumber \\
&\leq (1-\alpha)e_t +  \alpha \left[r_t + \gamma P_t V_t - r_t - \gamma P_t V^*\right] =
\left[ (1-\alpha) I + \alpha\gamma P_t\right]  e_t. \label{eq:e_upper_bound}
\end{align}

\textbf{Lower bound:}
\begin{align}
e_{t+1} &= V_{t+1} - V^* = (1-\alpha)e_t + 
\alpha \left[r_t + \gamma P_t V_t - r^* - \gamma P^* V^* \right] \nonumber \\
&\geq (1-\alpha)e_t +  \alpha \left[r^* + \gamma P^* V^* - r^* - \gamma P^* V^*\right] =
\left[ (1-\alpha) I + \alpha\gamma P^*\right]  e_t. \label{eq:e_lower_bound}
\end{align}

We normalize the expression in \eqref{eq:e_upper_bound} to obtain a stochastic matrix:
\begin{equation*}
 \left[ (1-\alpha) I + \gamma P_t\right] = [(1-\alpha) + \alpha\gamma]P_{t\alpha} = \gamma_\alpha P_{t\alpha},
\end{equation*}
where $P_{t\alpha}$ is a stochastic matrix with diagonal entries of at least $(1-\alpha)/((1-\alpha) + \alpha\gamma) > (1-\alpha)$.

Normalizing the matrix in the lower bound \eqref{eq:e_upper_bound} in the same way and combining both bounds, we obtain:
\begin{equation} \label{eq:e_double_bound}
\gamma_\alpha P^*_\alpha e_t\le e_{t+1} \le \gamma_\alpha P_{t\alpha} e_t.
\end{equation}
This inequality implies that each element of $e_{t+1}$ can be represented as a convex combination of the left- and right-hand side expressions. By introducing a diagonal matrix $D_t$ to combine these expressions, the error vector dynamics can be expressed as:

\begin{equation} \label{eq:error_update}
e_{t+1} =\gamma_\alpha \left[ D_t P^*_\alpha + (I - D_t) P_{t\alpha} \right]  e_t=\gamma_\alpha M_t e_t.
\end{equation}

By construction, all matrices $M_t$ are row stochastic with diagonal entries of at least $1-\alpha$. Additionally, by Assumption \ref{as:ergodicity}, the underlying graph of $M_t$ is strongly connected. The stochasticity of $M_t$ implies the existence of a reversible Markov chain, which serves as our main analysis tool:

\begin{definition}
Let $\{M_t\}$ be a sequence of row stochastic matrices and $\{p_t \}$ a sequence of probability distribution vectors. Then, $\{p_t \}$ is called an \textit{absolute probability sequence} of $\{M_t\}$ if
\begin{equation} \label{eq:p_dynamics}
p_{t+1}^T M_t = p_t^T, \quad \forall t \geq 0.
\end{equation}
\end{definition}

The existence of an absolute probability sequence for a sequence of stochastic matrices $\{M_t\}$ was established by Kolmogorov \citep{kolmogoroff1936theorie}. This sequence allows us to define a corresponding norm, which will be used to show convergence:

\begin{definition}
Let $\{p_t \}$ be an absolute probability sequence. We define the corresponding $\{p_t \}$-weighted scalar product between vectors $x$ and $y \in \mathbb{R}^n$ as
\begin{equation*}
\langle x,y \rangle_{p_t} = \sum_{i=1}^n p_t[i]x[i]y[i],
\end{equation*}
and the corresponding $p_t$-weighted squared norm as:
\begin{equation*}
||x||^2_{p_t} = \sum_{i=1}^n p_t[i] x^2[i].
\end{equation*}
\end{definition}

Now, we decompose the error vector $e_t$ into two components: one in the consensus subspace and the other in its orthogonal complement, with respect to this new scalar product:
\begin{equation} \label{eq:decomposition}
e_t = c_t \mathbf{1} + \Delta_t,\, \text{where } \langle \mathbf{1}, \Delta_t \rangle_{p_t} =
p_t^T\Delta_t = 0.
\end{equation}
We then have that
\begin{equation*}
c_t = \frac{ \langle e_t, \mathbf{1} \rangle_{p_t}}{ \langle \mathbf{1}, \mathbf{1} \rangle_{p_t}},
\end{equation*}
which corresponds to the $p_t$-weighted average of $e_t$, providing insight into the choice of $c_t$.

\section{Main result}

\begin{theorem} \label{thm:main}
Let $e_t$ be a bounded initial error vector that is not at consensus (\textit{i.e.}, $\Delta_t \ne 0$), $\{M_t\}$ a sequence of row-stochastic matrices, $\gamma \in (0,1)$ a discount factor, and $\{p_t\}$ an absolute probability sequence for $\{M_t\}$. If, for any $t$, the matrix $M_t$ has diagonal entries of at least $(1-\alpha) \in (0,1)$ and a strongly connected underlying graph, then the sequence $e_t$ generated by Update \ref{eq:vi_update} satisfies
\begin{equation}
|| \Delta_{t+1}||_{p_{t+1}}^2 \leq \gamma_\alpha^2(1-R_t)|| \Delta_{t}||_{p_t}^2,
\end{equation}
where $R_t > 0$ for any $t \geq 0$.
\end{theorem}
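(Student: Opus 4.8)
The plan is to exploit the error dynamics $e_{t+1} = \gamma_\alpha M_t e_t$ from \eqref{eq:error_update} together with the defining relation $p_{t+1}^T M_t = p_t^T$ of the absolute probability sequence. First I would record the consensus component: since $c_{t+1}\mathbf{1} + \Delta_{t+1} = \gamma_\alpha M_t(c_t \mathbf{1} + \Delta_t) = \gamma_\alpha c_t \mathbf{1} + \gamma_\alpha M_t \Delta_t$, applying $p_{t+1}^T$ and using $p_{t+1}^T M_t = p_t^T$, $p_{t+1}^T \mathbf{1} = 1$, $p_t^T \Delta_t = 0$, I get $c_{t+1} = \gamma_\alpha c_t$, and hence $\Delta_{t+1} = \gamma_\alpha M_t \Delta_t - (\gamma_\alpha - 1)c_t\mathbf{1}$; but more useful is simply $\Delta_{t+1} = \gamma_\alpha\big(M_t \Delta_t - (p_t^T M_t \Delta_t)\mathbf{1}/(p_{t+1}^T\mathbf 1)\big)$ after subtracting the $p_{t+1}$-weighted mean, i.e.\ $\Delta_{t+1}$ is $\gamma_\alpha$ times the projection of $M_t\Delta_t$ onto the $p_{t+1}$-orthogonal complement of $\mathbf{1}$. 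Since orthogonal projection only decreases the $\|\cdot\|_{p_{t+1}}$ norm, $\|\Delta_{t+1}\|_{p_{t+1}}^2 \le \gamma_\alpha^2 \|M_t\Delta_t\|_{p_{t+1}}^2$.

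The core estimate is therefore to bound $\|M_t\Delta_t\|_{p_{t+1}}^2$ in terms of $\|\Delta_t\|_{p_t}^2$. Writing it out, $\|M_t\Delta_t\|_{p_{t+1}}^2 = \sum_i p_{t+1}[i]\big(\sum_j M_t[i,j]\Delta_t[j]\big)^2$. I would apply Jensen's inequality on the inner convex combination (rows of $M_t$ are stochastic) to get $\big(\sum_j M_t[i,j]\Delta_t[j]\big)^2 \le \sum_j M_t[i,j]\Delta_t[j]^2$, and then the key cancellation: $\sum_i p_{t+1}[i]\sum_j M_t[i,j]\Delta_t[j]^2 = \sum_j \big(\sum_i p_{t+1}[i]M_t[i,j]\big)\Delta_t[j]^2 = \sum_j p_t[j]\Delta_t[j]^2 = \|\Delta_t\|_{p_t}^2$. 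This already gives the bound with $R_t = 0$; the real work is to extract a strictly positive $R_t$ from the slack in Jensen's inequality, which is nonzero precisely because $M_t$ has a strongly connected graph and large diagonal entries, so the rows genuinely average over distinct values of $\Delta_t$ (recall $\Delta_t \ne 0$, so $\Delta_t$ is non-constant).

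Quantifying the Jensen gap is the main obstacle. For a fixed row $i$, the gap is $\sum_j M_t[i,j]\Delta_t[j]^2 - \big(\sum_j M_t[i,j]\Delta_t[j]\big)^2 = \tfrac12\sum_{j,k} M_t[i,j]M_t[i,k](\Delta_t[j]-\Delta_t[k])^2 \ge M_t[i,i]\sum_{k} M_t[i,k](\Delta_t[i]-\Delta_t[k])^2 \ge (1-\alpha)\sum_k M_t[i,k](\Delta_t[i]-\Delta_t[k])^2$, using the diagonal lower bound. Summing against $p_{t+1}[i]$ and using strong connectivity, I would argue that the aggregated quantity $\sum_i p_{t+1}[i](1-\alpha)\sum_k M_t[i,k](\Delta_t[i]-\Delta_t[k])^2$ is bounded below by a graph-dependent constant times $\sum_j p_t[j](\Delta_t[j] - \bar\Delta_t)^2$ where $\bar\Delta_t$ is some weighted mean — essentially a Poincaré / spectral-gap inequality for the (time-varying) reversible chain associated with $M_t$ and weights $p_{t+1}$. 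Since $\Delta_t$ is non-constant, this lower bound is strictly positive, so dividing by $\|\Delta_t\|_{p_t}^2$ produces $R_t > 0$. I would need to be careful that $R_t$ may depend on $t$ (the theorem only claims $R_t > 0$ pointwise, not uniformly), and that $p_t[j] > 0$ for all $j$ — which follows because the absolute probability sequence of a sequence of stochastic matrices with uniformly positive diagonals and strongly connected graphs has entries bounded away from zero. Collecting the pieces: $\|\Delta_{t+1}\|_{p_{t+1}}^2 \le \gamma_\alpha^2\big(\|\Delta_t\|_{p_t}^2 - (\text{positive Jensen gap})\big) = \gamma_\alpha^2(1-R_t)\|\Delta_t\|_{p_t}^2$.
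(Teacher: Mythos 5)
Your proposal is correct and follows essentially the same route as the paper: the exact Jensen identity for each stochastic row (the paper's Lemma \ref{lem:prob_dist}), the collapse $\sum_i p_{t+1}[i]M_t[i,j]=p_t[j]$ from the absolute probability sequence, and positivity of the gap from the diagonal lower bound, strong connectivity, and $p_t^T\Delta_t=0$. The only cosmetic differences are that your intermediate formula $\Delta_{t+1}=\gamma_\alpha M_t\Delta_t-(\gamma_\alpha-1)c_t\mathbf{1}$ should simply read $\Delta_{t+1}=\gamma_\alpha M_t\Delta_t$ (so the projection step is vacuous and your first inequality is actually an equality), and that you phrase the strict positivity of $R_t$ as a Poincar\'e-type inequality whereas the paper argues directly that a zero gap would force $\Delta_t$ to be constant and hence zero.
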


This theorem implies the following corollary on the convergence of the Value Iteration algorithm with a rate higher than $\gamma$, which constitutes the main result of this paper.

\begin{corollary} \label{cor:main_result}
Suppose an MDP $\mathcal{M}$ and let $\{V_t\}$ be a sequence produced by the Value Iteration algorithm, starting with values $V_0 = e_0 + V^*$. Then, if Assumption \ref{as:ergodicity} holds, Value Iteration induces a sequence of row-stochastic matrices $\{M_t\}$ with diagonal entries $(1-\alpha) \in (0,1)$ and an absolute probability sequence $\{p_t\}$. The resulting error vector sequence $e_t = c_t \mathbf{1} + \Delta_t$ satisfies:
\begin{equation} \label{eq:main_result}
|| \Delta_{T}||_{p_T}^2 \leq (\gamma_\alpha^2(1-\lambda))^T || \Delta_{0}||_{p_0}^2,    
\end{equation}
where $0 < \lambda \leq R_t$ for all $0 \leq t < T$.
\end{corollary}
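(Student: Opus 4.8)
The plan is to derive Corollary~\ref{cor:main_result} from Theorem~\ref{thm:main} in three moves: (i) show that the error recursion produced by Value Iteration is exactly of the abstract form covered by the theorem, with $\{M_t\}$ satisfying all three hypotheses; (ii) invoke Kolmogorov's theorem for the existence of the absolute probability sequence $\{p_t\}$; and (iii) telescope the one-step contraction of Theorem~\ref{thm:main}. Recall from \eqref{eq:error_update} that, with $e_t = V_t - V^*$, the iterates obey $e_{t+1} = \gamma_\alpha M_t e_t$ where $M_t = D_t P^*_\alpha + (I - D_t) P_{t\alpha}$ and $D_t$ is a diagonal matrix with entries in $[0,1]$ chosen so that the coordinatewise sandwich \eqref{eq:e_double_bound} holds. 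Since $P^*_\alpha$ and $P_{t\alpha}$ are row-stochastic with diagonal entries strictly above $1-\alpha$ (as noted right after \eqref{eq:e_double_bound}), each $M_t$ is row-stochastic with diagonal entries $\ge 1-\alpha \in (0,1)$, and each $e_t$ is bounded because $\|e_t\|_\infty \le \gamma_\alpha^t\|e_0\|_\infty$ with $\gamma_\alpha = (1-\alpha)+\alpha\gamma < 1$. This disposes of two of the three hypotheses of Theorem~\ref{thm:main}.

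The substantive point — and the main obstacle — is to verify that the underlying (directed) graph of each $M_t$ is strongly connected, and this is exactly where Assumption~\ref{as:ergodicity} is used. I would argue row by row that $\mathrm{supp}(M_t[i,\cdot]) \supseteq \mathrm{supp}(P^*[i,\cdot])$. Since $M_t[i,j] \ge D_t[i,i]\,P^*_\alpha[i,j]$, the inclusion is immediate whenever $D_t[i,i] > 0$. The delicate case is $D_t[i,i] = 0$: by the construction of $D_t$ this happens only when coordinate $i$ of the upper bound \eqref{eq:e_upper_bound} is tight, i.e. $e_{t+1}[i] = \gamma_\alpha (P_{t\alpha}e_t)[i]$. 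Expanding $e_{t+1}[i]$ via \eqref{eq:vi_update} and substituting $V^*[i] = r^*[i] + \gamma (P^*V^*)[i]$ shows that this equality is equivalent to $r(i,\pi_t(i)) + \gamma\sum_{s'} P(s'|i,\pi_t(i))V^*(s') = \max_{a}\big(r(i,a) + \gamma\sum_{s'} P(s'|i,a)V^*(s')\big)$, i.e. the action $\pi_t(i)$ attains the Bellman maximum at state $i$ for $V^*$. Because Assumption~\ref{as:ergodicity} stipulates a \emph{unique} optimal policy, $\pi_t(i) = \pi^*(i)$, hence $P_{t\alpha}[i,\cdot] = P^*_\alpha[i,\cdot]$ and $M_t[i,\cdot] = P^*_\alpha[i,\cdot]$, so the support inclusion still holds. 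Therefore the digraph of $M_t$ contains that of $P^*$, which is strongly connected since $P^*$ induces an irreducible chain; all hypotheses of Theorem~\ref{thm:main} hold at every $t$.

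It then remains to assemble the pieces. The existence of an absolute probability sequence $\{p_t\}$ for $\{M_t\}$ is Kolmogorov's theorem \citep{kolmogoroff1936theorie}. Applying Theorem~\ref{thm:main} at each $t \in \{0,\dots,T-1\}$ gives $\|\Delta_{t+1}\|_{p_{t+1}}^2 \le \gamma_\alpha^2(1 - R_t)\|\Delta_t\|_{p_t}^2$ with $R_t > 0$; multiplying these $T$ inequalities and using $1 - R_t \le 1 - \lambda$ with $\lambda := \min_{0 \le t < T} R_t$ — a minimum of finitely many strictly positive numbers, hence $\lambda > 0$ — yields \eqref{eq:main_result}. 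The only degenerate situation is $\Delta_s = 0$ for some $s < T$, but since $e_{t+1} = \gamma_\alpha M_t e_t$ maps the consensus line into itself, $\Delta_t = 0$ for all $t \ge s$ and \eqref{eq:main_result} holds trivially. Note that $\gamma_\alpha^2(1-\lambda) < 1$, so the bound does exhibit geometric decay of the disagreement component, completing the argument.
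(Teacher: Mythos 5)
Your proof is correct and follows the route the paper intends: the paper gives no separate proof of Corollary~\ref{cor:main_result}, treating it as the iterated application of Theorem~\ref{thm:main} with $\lambda = \min_{0 \le t < T} R_t > 0$ (a finite minimum of positive quantities), which is exactly your telescoping step, together with Kolmogorov's existence result for $\{p_t\}$. The one place you go beyond the paper is the row-by-row support argument showing $\mathrm{supp}(M_t[i,\cdot]) \supseteq \mathrm{supp}(P^*[i,\cdot])$ — handling the case $D_t[i,i]=0$ via tightness of the upper bound and uniqueness of the optimal policy — whereas the paper merely asserts that Assumption~\ref{as:ergodicity} makes the graph of $M_t$ strongly connected; your verification correctly fills in that implicit step.
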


\section{Proofs}
This proof uses several standard lemmas that we state and prove here for completeness.
\begin{definition}{Laplacian matrix}
A matrix $L \in \mathbb{R}^{n \times n}$ is a Laplacian matrix if it is a symmetric
matrix with zero row sums and non-positive off-diagonal entries.
\end{definition}
\begin{lemma} \label{lem:lapl_matrix}
If $L \in \mathbb{R}^{n \times n}$ is a Laplacian matrix, then, for all $x \in \mathbb{R}^n$ ,
\begin{equation} \label{eq:laplacian_matrix_prop}
x^TLx = -\frac{1}{2} \sum_i \sum_{j \ne i} L[i,j] (x_i - x_j)^2.
\end{equation}
\end{lemma}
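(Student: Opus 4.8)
The plan is to establish the identity $x^T L x = -\frac{1}{2}\sum_i \sum_{j \ne i} L[i,j](x_i - x_j)^2$ by a direct algebraic manipulation of the right-hand side, using the two defining properties of a Laplacian matrix: zero row sums and symmetry. First I would expand the quadratic term on the right: $(x_i - x_j)^2 = x_i^2 - 2 x_i x_j + x_j^2$, so that
\[
-\frac{1}{2}\sum_i \sum_{j \ne i} L[i,j](x_i - x_j)^2 = -\frac{1}{2}\sum_i \sum_{j \ne i} L[i,j] x_i^2 + \sum_i \sum_{j \ne i} L[i,j] x_i x_j - \frac{1}{2}\sum_i \sum_{j \ne i} L[i,j] x_j^2.
\]

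Next I would simplify each of the three sums. For the first sum, factor out $x_i^2$ and use the zero-row-sum property: $\sum_{j \ne i} L[i,j] = -L[i,i]$, so $-\frac{1}{2}\sum_i x_i^2 \sum_{j\ne i} L[i,j] = \frac{1}{2}\sum_i L[i,i] x_i^2$. For the third sum, swap the roles of the summation indices $i$ and $j$ and then use symmetry $L[i,j] = L[j,i]$ together with the zero-row-sum property again; this shows the third sum equals the first, contributing another $\frac{1}{2}\sum_i L[i,i] x_i^2$. The middle sum is simply $\sum_i \sum_{j \ne i} L[i,j] x_i x_j$, which, since the $j = i$ term $L[i,i]x_i^2$ can be added and subtracted, equals $\sum_{i,j} L[i,j] x_i x_j - \sum_i L[i,i] x_i^2 = x^T L x - \sum_i L[i,i] x_i^2$. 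Adding the three contributions, the $\sum_i L[i,i] x_i^2$ terms cancel and what remains is exactly $x^T L x$.

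I do not expect a genuine obstacle here, since the statement is a standard fact and the proof is purely bookkeeping; the only point requiring a little care is the index relabeling in the third sum, where one must correctly apply symmetry before invoking the zero-row-sum identity so that the off-diagonal structure is preserved. I would also note for completeness that one should be explicit that the inner sums range over $j \ne i$ throughout, so that diagonal entries never enter the squared differences, which is what forces the appearance of $L[i,i]$ only through the row-sum relation.
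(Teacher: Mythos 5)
Your proof is correct and follows essentially the same route as the paper's: expand $(x_i-x_j)^2$, use the zero-row-sum property and symmetry to collapse the $x_i^2$ and $x_j^2$ terms, and identify the cross term with $x^TLx$. The only cosmetic difference is that the paper first adds the (vanishing) diagonal terms to work with an unrestricted double sum, whereas you keep the $j\ne i$ restriction and track the $L[i,i]$ contributions explicitly; both are valid bookkeeping.
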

\begin{proof}
We expand the sum in Equation \ref{eq:laplacian_matrix_prop}:
\begin{align*}
\sum_i \sum_{j \ne i} L[i,j] (x_i - x_j)^2 &= \sum_{i,j}  L[i,j] (x_i - x_j)^2 = 
\sum_{i,j} L[i,j] x_i^2 - 2L[i,j] x_i x_j + L[i,j] x_j^2 = \\
&=-2\sum_{i,j} L[i,j] x_i x_j = -2x^tLx, \\
\end{align*}
where the third equality uses that $\sum_j L[i,j]=\sum_i L[i,j]=0$
\end{proof}

\begin{lemma} \label{lem:prob_dist}
Let $p \in \mathbb{R}^n_+$ be a non negative vector with sum of elements equal to $1$. Then, for any $x \in \mathbb{R}^n$
\begin{equation*}
\left( \sum_i p[i] x_i \right)^2 = \sum_i p[i] x_i^2  - \frac{1}{2} \sum_{i,j} p[i] p[j] (x_i - x_j)^2.
\end{equation*}
\end{lemma}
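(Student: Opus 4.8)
The plan is to verify the identity by directly expanding the double sum on the right-hand side and collapsing it using the normalization $\sum_i p[i] = 1$. Write
\begin{equation*}
\tfrac{1}{2}\sum_{i,j} p[i]p[j](x_i-x_j)^2 = \tfrac{1}{2}\sum_{i,j} p[i]p[j]\bigl(x_i^2 - 2x_ix_j + x_j^2\bigr),
\end{equation*}
and treat the three resulting terms separately. For the first term, factor the sum over $j$: $\sum_{i,j} p[i]p[j]x_i^2 = \bigl(\sum_i p[i]x_i^2\bigr)\bigl(\sum_j p[j]\bigr) = \sum_i p[i]x_i^2$. By the symmetry $i\leftrightarrow j$ the third term equals the first. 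The middle (cross) term factors as $\sum_{i,j} p[i]p[j]x_ix_j = \bigl(\sum_i p[i]x_i\bigr)\bigl(\sum_j p[j]x_j\bigr) = \bigl(\sum_i p[i]x_i\bigr)^2$.

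Substituting these back gives
\begin{equation*}
\tfrac{1}{2}\sum_{i,j} p[i]p[j](x_i-x_j)^2 = \sum_i p[i]x_i^2 - \Bigl(\sum_i p[i]x_i\Bigr)^2,
\end{equation*}
and rearranging this equality yields exactly the stated claim.

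An alternative, slightly slicker route is to reuse Lemma \ref{lem:lapl_matrix}: set $L[i,j] = -p[i]p[j]$ for $i\ne j$ and $L[i,i] = p[i](1-p[i])$. One checks that $L$ is symmetric with non-positive off-diagonal entries and zero row sums (using $\sum_j p[j]=1$), hence a Laplacian; then Lemma \ref{lem:lapl_matrix} gives $x^TLx = \tfrac12\sum_{i,j}p[i]p[j](x_i-x_j)^2$, while expanding $x^TLx$ directly gives $\sum_i p[i]x_i^2 - \bigl(\sum_i p[i]x_i\bigr)^2$. Either way the computation is routine; the only point requiring care is bookkeeping—keeping track of which sums range over all ordered pairs versus distinct pairs, and invoking $\sum_i p[i]=1$ at the right moments—so I do not anticipate any genuine obstacle.
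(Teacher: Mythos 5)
Your proposal is correct. Your primary argument --- directly expanding $\tfrac12\sum_{i,j}p[i]p[j](x_i-x_j)^2$ into three factored sums and collapsing them with $\sum_i p[i]=1$ --- is a clean, self-contained computation, and the algebra checks out (the two squared terms each reduce to $\sum_i p[i]x_i^2$ and the cross term factors into $(\sum_i p[i]x_i)^2$, giving the identity after rearrangement). The paper instead takes exactly the route you list as your ``alternative'': it writes $\sum_i p[i]x_i^2 - (\sum_i p[i]x_i)^2 = x^T(\mathrm{diag}(p)-pp^T)x$, verifies that $\mathrm{diag}(p)-pp^T$ is a Laplacian (using $\sum_j p[j]=1$ for the zero row sums), and invokes Lemma~\ref{lem:lapl_matrix}. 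So your main proof is more elementary and does not depend on the preceding lemma, which makes the statement stand on its own; the paper's version buys a slightly shorter write-up by reusing Lemma~\ref{lem:lapl_matrix} and makes the connection to Laplacian quadratic forms explicit, which is the structural viewpoint the later sections (the generalized Rayleigh quotient characterization) build on. Either proof is acceptable; your bookkeeping remark about ordered pairs versus $j\ne i$ is the only subtlety, and it is handled correctly since the diagonal terms vanish.
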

\begin{proof}
After rearranging terms we can write
\begin{equation*}
\sum_i p[i] x_i^2 - \left( \sum_i p[i] x_i \right)^2 = x^T (\text{diag}(p) - pp^T) x
\end{equation*}
The matrix $(\text{diag}(p) - pp^T)$ is a Laplacian matrix since it is symmetric, has rows aadding to $0$ and non-positive off-diagonal entries. We then can apply Lemma \ref{lem:lapl_matrix}:
\begin{equation*}
x^T (\text{diag}(p) - pp^T) x = -\frac{1}{2} \sum_i \sum_{i\ne j} (-p[i] p[j]) (x_i - x_j)^2 = \frac{1}{2} \sum_i \sum_{i\ne j} p[i] p[j]  (x_i - x_j)^2
\end{equation*}
\end{proof}
The following lemma starts the analysis of the convergence of sequence $\{ \Delta_t \}$.

\begin{lemma} \label{lem:c_delta_dynamics}
Let $e_t$ be the sequence of errors described by the update \ref{eq:error_update}. Then, its decomposition \ref{eq:decomposition} satisfies
\begin{align}
c_{t+1} &= \gamma_\alpha c_t \label{eq:c_dynamics} \\
\Delta_{t+1} &= \gamma_\alpha M_t \Delta_t \label{eq:delta_dynamics}
\end{align}
\end{lemma}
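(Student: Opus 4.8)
The plan is to obtain both identities by a direct computation, using only two ingredients: the defining property \eqref{eq:p_dynamics} of the absolute probability sequence, $p_{t+1}^T M_t = p_t^T$, and the row-stochasticity of $M_t$, which gives $M_t\mathbf{1} = \mathbf{1}$. As a preliminary step I would record that, since $p_t$ is a probability vector, $\langle \mathbf{1},\mathbf{1}\rangle_{p_t} = \sum_i p_t[i] = 1$, so the coefficient in \eqref{eq:decomposition} simplifies to $c_t = \langle e_t,\mathbf{1}\rangle_{p_t} = p_t^T e_t$; this is the only place the normalization of $p_t$ is used.

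For \eqref{eq:c_dynamics}, I substitute the error update \eqref{eq:error_update} into $c_{t+1} = p_{t+1}^T e_{t+1}$, obtaining $c_{t+1} = \gamma_\alpha\, p_{t+1}^T M_t e_t$, and then apply $p_{t+1}^T M_t = p_t^T$ to conclude $c_{t+1} = \gamma_\alpha\, p_t^T e_t = \gamma_\alpha c_t$.

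For \eqref{eq:delta_dynamics}, I write $\Delta_{t+1} = e_{t+1} - c_{t+1}\mathbf{1}$ and substitute $e_{t+1} = \gamma_\alpha M_t e_t$, the just-proved $c_{t+1} = \gamma_\alpha c_t$, and the decomposition $e_t = c_t\mathbf{1} + \Delta_t$. This gives $\Delta_{t+1} = \gamma_\alpha M_t(c_t\mathbf{1} + \Delta_t) - \gamma_\alpha c_t\mathbf{1} = \gamma_\alpha c_t(M_t\mathbf{1} - \mathbf{1}) + \gamma_\alpha M_t\Delta_t$, and the first term vanishes because $M_t\mathbf{1} = \mathbf{1}$, yielding $\Delta_{t+1} = \gamma_\alpha M_t\Delta_t$.

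Finally, to confirm that this is genuinely the decomposition \eqref{eq:decomposition} at time $t+1$ (i.e.\ that $\Delta_{t+1}$ lies in the $p_{t+1}$-orthogonal complement of $\mathbf{1}$), I would check $p_{t+1}^T\Delta_{t+1} = \gamma_\alpha\, p_{t+1}^T M_t\Delta_t = \gamma_\alpha\, p_t^T\Delta_t = 0$, again by \eqref{eq:p_dynamics} together with $p_t^T\Delta_t = 0$. There is no substantial obstacle here: the lemma is essentially bookkeeping, and the only point requiring care is tracking which probability vector ($p_t$ versus $p_{t+1}$) weights each inner product and invoking the absolute-probability identity in the correct direction.
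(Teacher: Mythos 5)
Your proof is correct and follows essentially the same route as the paper: project the update $e_{t+1} = \gamma_\alpha M_t e_t$ onto $p_{t+1}$ and invoke $p_{t+1}^T M_t = p_t^T$ to get the $c$-recursion, then subtract and use $M_t\mathbf{1} = \mathbf{1}$ for the $\Delta$-recursion. Your added check that $p_{t+1}^T\Delta_{t+1} = 0$ (confirming the decomposition is consistent at time $t+1$) is a worthwhile detail the paper leaves implicit.
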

\begin{proof}
To obtain \ref{eq:c_dynamics} we need to multiply Equation \ref{eq:error_update} by $p_{t+1}^T$"
\begin{align*}
p_{t+1}^T e_{t+1} &= \gamma_\alpha p_{t+1}^T M_t e_{t} = \gamma_\alpha p_t^T e_t \implies
c_{t+1} = \gamma_\alpha c_t,
\end{align*}
where the equality in the first line is obtained by using the definition of $p_t$ \ref{eq:p_dynamics} and the equality in the second line by definition of $c_t$. 
Then, this result can be used to obtain \ref{eq:delta_dynamics}:
\begin{align*}
\Delta_{t+1} &= e_{t+1} - c_{t+1} \mathbf{1} = \gamma_\alpha M_t e_t - \gamma_\alpha c_t \mathbf{1}\\
    &= \gamma_\alpha M_t (e_t - c_t \mathbf{1}) = \gamma_\alpha M_t \Delta_t,
\end{align*}
where the second equality uses stochasticity of $M_t$, $M_t \mathbf{1}= \mathbf{1} $
\end{proof}
We are now ready to proof the main theorem:
\begin{proof}[Proof of Theorem \ref{thm:main}]
\begin{align*}
||\Delta_{t+1}||^2_{t+1} &= \sum_i p_{t+1}[i]\Delta_{t+1}^2[i]\\
\text{(using Lemma \ref{lem:c_delta_dynamics})} &= \sum_i p_{t+1}[i]\left(\gamma_\alpha \sum_j M_t[i,j]\Delta_t[j]\right)^2 \\
\text{(using Lemma \ref{lem:prob_dist})} &= \gamma_\alpha^2 \sum_i p_{t+1}[i] 
\left( \sum_j M_t[i,j] \Delta_t^2[j] - \frac{1}{2} \sum_{k,l} M_t[i,k]M_t[i,l] (\Delta_t[k] - \Delta_t[l])^2 \right) \\
&= \gamma_\alpha^2 \left(\sum_j \left( \sum_i p_{t+1}[i]M_t[i,j]\right)\Delta_t^2[j] - \frac{1}{2} \sum_{i,k,l} p_{t+1}[i] M_t[i,k]M_t[i,l] (\Delta_t[k] - \Delta_t[l])^2  \right) \\
\text{(by \ref{eq:p_dynamics})} &= \gamma_\alpha^2 \left( \sum_j p_t[j] \Delta_t^2[j]
- \frac{1}{2} \sum_{i,k,l} M_t[i,k]M_t[i,l] (\Delta_t[k] - \Delta_t[l])^2 \right) \\
&=\gamma_\alpha^2 \left(||\Delta_t||^2_{p_t} - \frac{1}{2} \sum_{i,k,l} M_t[i,k]M_t[i,l] (\Delta_t[k] - \Delta_t[l])^2   \right) \\
&=\gamma_\alpha^2 (1 - R_t) ||\Delta_t||^2_{p_t},
\end{align*}
where
\begin{equation} \label{eq:R_definition}
R_t = \frac{ \frac{1}{2} \sum_{i,k,l} M_t[i,k]M_t[i,l] (\Delta_t[k] - \Delta_t[l])^2}{||\Delta_t||^2_{p_t}}.
\end{equation}
To conclude the proof of the theorem, we need to show that $R_t$ is positive. First, let's define a minimum entry of $p_t$ as $p_{\rm min} > 0$ and $\epsilon$ to be the smallest non-zero entry of $M_t$. Recalling that diagonal entries of M are larger than $1-\alpha > 0$, we obtain
\begin{equation*}
R_t \ge \frac{\frac{1}{2} \sum_{i,k,l} p_{\rm min}(1-\alpha)\epsilon (\Delta_t[k] - \Delta_t[l])^2}{||\Delta_t||^2_{p_t}},
\end{equation*}
because for each row $i$ of $M_t$ we have a diagonal entry of at least $1-\alpha$ and at least one other non-zero entry, which is then not smaller than $\epsilon$. All the off-diagonal entries of row $i$ cannot be zero otherwise the row sum would not be $1$. Then, we can define $q=(1-\alpha) 
\epsilon p_{\rm min} > 0$ and write
\begin{equation*}
R_t \ge \frac{\frac{q}{2}\sum_{i,k,l} (\Delta_t[k] - \Delta_t[l])^2}{||\Delta_t||^2_{p_t}}
\end{equation*}
Then recall, that $p_t^T\Delta_t=0$, which means that all entries of the vector $\Delta_t$ can be equal only if $\Delta_t[i] = 0$ for all $i$. Therefore, $R_t=0$ implies that $\Delta_t=0$, which corresponds to a stationary point of the update \ref{eq:delta_dynamics} and was excluded by the theorem assumption. This observation concludes the proof of the theorem.
\end{proof}

\section{Characterization of the convergence rate}
Let us define the following Laplacian matrix:
\begin{equation*}
L_t[k,l]=\begin{cases}
\begin{aligned}
& -\sum_i p_{t+1}[i] M_t[i,k]M_t[i,l], &k \ne l\\
& \sum_i \sum_{k \ne l} p_{t+1}[i] M_t[i,k]M_t[i,l], &k=l\\
\end{aligned}
\end{cases}\end{equation*}
Or in the matrix notation:
\begin{equation*}
L_t = -M_t^T\Pi M_t + \text{diag}(M_t^T\Pi M_t \mathbf{1}),
\end{equation*}
where $\Pi_t = \text{diag}(p_t)$. Matrix $L_t$ is symmetric. By definition of its diagonal entries, its rows sum to zero. By non-negativity of $M$ and $p$, the off-diagonal entries are well non-positive. The matrix $L_t$ is thus well a Laplacian matrix.

Using these definitions, the term \ref{eq:R_definition} can be written as a generalized Rayleigh quotient:
\begin{equation} \label{eq:R_rayleigh}
R_t = \frac{\Delta^T L_t \Delta}{ \Delta^T \Pi_t \Delta}.
\end{equation}
The generalized Rayleigh quotient is presented in \cite{Pattabhiraman_1974}.
For clarity, below we do not use iteration indices $t$. the bounds on the generalized Rayleigh quotient are related to the generalized eigenvalue problem:
\begin{equation} \label{eq:gen_ev}
Lv = \lambda \Pi v.
\end{equation}
The matrix $\Pi$ is diagonal and therefore its invert $\Pi^{-1}$ exist sand is diagonal, as well as its factorization $\Pi=\Pi^{1/2}\Pi^{1/2}$. Thus, the pairs of eigenvalues and eigenvectors $(\lambda_i,v_i)$ associated with the problem \ref{eq:gen_ev} can be seen as the classical pairs of eigenvalues and eigenvectors associated with the matrix $\Pi^{-1}L$. One of this pairs is $(0,1)$, by definition of $L$. We can also relate these pairs $(\lambda_i, v_i)$ of eigenvalues and eigenvectors to the ones of the symmetric matrix $\Tilde{L}=\Pi^{-1/2}L \Pi^{-1/2}$, which are $(\lambda_i,\Pi^{1/2}v_i)$.
\begin{equation*}
Lv = \lambda \Pi v \Leftrightarrow \Pi^{-1}Lv = \lambda v \Leftrightarrow 
(\Pi^{-1/2}L \Pi^{-1/2})(\Pi^{1/2}v) = \lambda (\Pi^{1/2}v).
\end{equation*}

Since $\Tilde{L}$ is symmetric, its matrix of eigenvectors $\Pi^{1/2}v$ is orthogonal, in the Euclidean scalar product $(\Pi^{1/2}v)^T(\Pi^{1/2}v)=I$, meaning that the matrix $V$ of eigenvectors of $\Pi^{-1}L$ are orthogonal, with respect to the $p$-weighted scalar product $V^T\Pi V=I$. Also, we have that $\Pi^{-1/2}$ is diagonal and then matrices $\Tilde{L}$ and $L$ are congruent, meaning that they have the same number of positive, zero and negative eigenvalues.

Since $V$ contains $n$ orthogonal vectors, we can decompose any vector $\Delta \in \mathbb{R}^n$ into a combination of such vectors:
\begin{equation*}
\Delta = \sigma_i \mathbf{1} + \sum_{i=2}^n \sigma_i v_i
\end{equation*}
Let us plug this into \ref{eq:R_rayleigh}
\begin{align*}
\frac{\Delta^TL\Delta}{\Delta^T \Pi \Delta} = \frac{\Delta^T L(\sigma_i \mathbf{1} + \sum_{i=2}^n \sigma_i v_i)}{\Delta^T \Pi(\sigma_i \mathbf{1} + \sum_{i=2}^n \sigma_i v_i)} &=
\frac{\Delta^T L(\sum_{i=2}^n \sigma_i v_i)}{\Delta^T \Pi(\sum_{i=2}^n \sigma_i v_i)} \\
\text{(Using \ref{eq:gen_ev})} &=\frac{\Delta^T(\sum_{i=2}^n \sigma_i \lambda_i \Pi v_i)}{\Delta^T \Pi(\sum_{i=2}^n \sigma_i v_i)} \\
&=\frac{(\sum_{j=1}^n \sigma_j v_j)^T \Pi (\sum_{i=2}^n \sigma_i \lambda_i v_i)}{(\sum_{j=1}^n \sigma_j v_j)^T \Pi (\sum_{i=2}^n \sigma_i v_i)} \\
&=\frac{\sum_{j=1}^n \sum_{i=2}^n \sigma_j \sigma_i \lambda_i v_j^T \Pi v_i}{\sum_{j=1}^n \sum_{i=2}^n \sigma_j \sigma_i v_j^T \Pi v_i} \\
\text{(using orthogonality of $V$)} &=
\frac{\sum_{i=2}^n \sigma_i^2 \lambda_i^2 \lambda_i}{\sum_{i=2}^n \sigma_i^2} \ge \lambda_2(\Tilde{L}),\\
\end{align*} 
where the first line uses the facts $L \mathbf{1} = 0$ and $\Delta^T\Pi \mathbf{1} = 0$. The value $\lambda_2(\Tilde{L})$ denotes the second smallest eigenvalue of $\Tilde{L}$, which is our lower bound of the Raleigh quotient:
\begin{equation*}
R = \frac{\Delta^T L \Delta}{\Delta^T \Pi \Delta} \ge \lambda_2(\Tilde{L}).
\end{equation*}
To pursue the characterization of the bound, we link$\lambda_2(\Tilde{L})$ and $\lambda_2 (L)$ in the following Lemma.
\begin{lemma} \label{lem:lambda_bound}
Let $\Pi = \text{diag}(p)$ and $p_{\rm max} = \max_i p_i \in (0, 1]$. We consider a Laplacian matrix $L$ and a modified matrix $\Tilde{L}=\Pi^{-1/2}L \Pi^{-1/2}$, with $\lambda_2(L)$ and $\lambda_2 (\Tilde{L})$ their respective second smallest eigenvalues. Than,
\begin{equation*}
\lambda_2 (\Tilde{L}) \ge p_{\rm max}^{-1} \lambda_2 (L).
\end{equation*}
\end{lemma}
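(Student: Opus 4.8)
The plan is to relate the two Rayleigh-type quotients associated with $L$ and $\tilde L = \Pi^{-1/2} L \Pi^{-1/2}$ through the standard Courant--Fischer variational characterization of the second smallest eigenvalue. Concretely, $\lambda_2(\tilde L)$ is the minimum of the Euclidean Rayleigh quotient $y^T \tilde L y / y^T y$ over all $y \ne 0$ orthogonal (in the Euclidean inner product) to $\mathbf{1}$, since $\tilde L \mathbf{1} = \Pi^{-1/2} L \Pi^{-1/2}\mathbf{1}$ — wait, one must be careful: the null vector of $\tilde L$ is $\Pi^{1/2}\mathbf{1}$, not $\mathbf{1}$, because $\tilde L (\Pi^{1/2}\mathbf{1}) = \Pi^{-1/2} L \mathbf{1} = 0$. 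So the minimization defining $\lambda_2(\tilde L)$ is over $y \perp \Pi^{1/2}\mathbf{1}$. I would then substitute $y = \Pi^{1/2} x$, which turns the constraint $y^T \Pi^{1/2}\mathbf{1} = 0$ into $x^T \Pi \mathbf{1} = p^T x = 0$, i.e. exactly the constraint that appears in the definition of $R$ via the decomposition \eqref{eq:decomposition}. Under this substitution $y^T \tilde L y = x^T L x$ and $y^T y = x^T \Pi x$, so
\begin{equation*}
\lambda_2(\tilde L) = \min_{\substack{x \ne 0 \\ p^T x = 0}} \frac{x^T L x}{x^T \Pi x}.
\end{equation*}

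Next I would bring in $\lambda_2(L)$ through its own Courant--Fischer formula: $\lambda_2(L) = \min\{x^T L x / x^T x : x \ne 0,\ x^T \mathbf{1} = 0\}$. The key obstacle is that the orthogonality constraint for $\lambda_2(L)$ is $x \perp \mathbf{1}$ in the Euclidean sense, whereas the constraint appearing in the formula for $\lambda_2(\tilde L)$ is $p^T x = 0$; these are different subspaces, so one cannot directly plug a minimizer of one problem into the other. The clean way around this is to avoid matching the constraints exactly and instead bound $\lambda_2(L)$ from above by a quantity not involving any orthogonality constraint at all — namely by comparing against $L$'s action on the whole space modulo its kernel. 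The cleanest route: for any $x$ with $p^T x = 0$, decompose $x = \bar x \mathbf{1} + x_\perp$ where $x_\perp \perp \mathbf{1}$ (Euclidean). Then $x^T L x = x_\perp^T L x_\perp \ge \lambda_2(L)\, \|x_\perp\|^2$ since $L\mathbf{1}=0$ and $x_\perp$ lies in the span of the eigenvectors with eigenvalues $\ge \lambda_2(L)$. It remains to compare $\|x_\perp\|^2$ with $x^T \Pi x$. Since $\Pi \preceq p_{\rm max} I$, we have $x^T \Pi x \le p_{\rm max}\, \|x\|^2 = p_{\rm max}(\,\bar x^2 n + \|x_\perp\|^2)$; the difficulty is the leftover $\bar x^2 n$ term, which we must control using $p^T x = 0$.

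To handle that leftover term I would observe that $p^T x = 0$ and $x = \bar x \mathbf{1} + x_\perp$ give $\bar x (p^T \mathbf{1}) = -p^T x_\perp$, i.e. $\bar x = -p^T x_\perp$ since $p^T \mathbf{1} = 1$. Hence $|\bar x| \le \|p\|\,\|x_\perp\| \le \|x_\perp\|$ (as $\|p\|_2 \le \|p\|_1 = 1$), and more sharply $n \bar x^2 = n (p^T x_\perp)^2$; combined with $\|x\|^2 = n\bar x^2 + \|x_\perp\|^2$ this lets me write $x^T \Pi x \le p_{\rm max}\|x\|^2$ but I actually want the reverse-direction bound $x^T L x / x^T \Pi x \ge p_{\rm max}^{-1} \lambda_2(L)$, which only needs $x^T \Pi x \le p_{\rm max}\, x^T x$ together with $x^T L x \ge \lambda_2(L) x^T x$ for $x \perp \mathbf 1$ — so the honest statement is that it suffices to prove the inequality for $x \perp \mathbf 1$ and then argue the general $p^T x = 0$ case reduces to it, because replacing $x$ by $x_\perp$ leaves $x^T L x$ unchanged while it can only decrease $x^T \Pi x$ is false in general, so instead I will simply chain: $\lambda_2(\tilde L) = \min_{p^T x = 0} x^T L x/ x^T \Pi x \ge \min_{p^T x = 0} x^T L x / (p_{\rm max} x^T x)$, and then lower-bound the numerator over this constraint set by $\lambda_2(L)\min$-type reasoning using the decomposition above. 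The main technical care — and the step I expect to be fiddly — is precisely this last comparison of constraint sets; everything else is a routine application of Courant--Fischer and the congruence $y = \Pi^{1/2}x$.
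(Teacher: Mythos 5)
Your reduction of $\lambda_2(\tilde{L})$ to a generalized Rayleigh quotient is correct, and in one respect sharper than the paper's own argument: you rightly observe that the kernel of $\tilde{L}$ is spanned by $\Pi^{1/2}\mathbf{1}$ rather than $\mathbf{1}$, so that the substitution $y=\Pi^{1/2}x$ gives the exact identity $\lambda_2(\tilde{L})=\min\{x^TLx/(x^T\Pi x):x\ne 0,\ p^Tx=0\}$ (the paper minimizes over $x^T\mathbf{1}=0$ and then tries to match the two constraint sets by rescaling, which does not go through as written). However, your proposal stops at precisely the decisive step. After writing $x=\bar{x}\mathbf{1}+x_\perp$ with $x_\perp\perp\mathbf{1}$, you correctly get $x^TLx=x_\perp^TLx_\perp\ge\lambda_2(L)\,\|x_\perp\|_2^2$, and what you then need is $x^T\Pi x\le p_{\rm max}\|x_\perp\|_2^2$. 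The estimates you actually write down do not deliver this: $x^T\Pi x\le p_{\rm max}\|x\|_2^2=p_{\rm max}(n\bar{x}^2+\|x_\perp\|_2^2)$ leaves the term $n\bar{x}^2$, and the bound $|\bar{x}|=|p^Tx_\perp|\le\|x_\perp\|_2$ only controls it up to a factor $n+1$, which would prove a weaker lemma. You explicitly defer this comparison as ``the step I expect to be fiddly,'' so as it stands the proposal is a plan with a hole at the crux rather than a proof.

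The gap is real but closable in one line. Since $p^Tx=0$, the scalar $c=0$ minimizes $(x-c\mathbf{1})^T\Pi(x-c\mathbf{1})$ over $c$ (its derivative is $-2(p^Tx-c)$), hence
\begin{equation*}
x^T\Pi x=\min_c\,(x-c\mathbf{1})^T\Pi(x-c\mathbf{1})\le (x-\bar{x}\mathbf{1})^T\Pi(x-\bar{x}\mathbf{1})=x_\perp^T\Pi x_\perp\le p_{\rm max}\|x_\perp\|_2^2,
\end{equation*}
and chaining with $x^TLx=x_\perp^TLx_\perp\ge\lambda_2(L)\|x_\perp\|_2^2$ gives $x^TLx/(x^T\Pi x)\ge p_{\rm max}^{-1}\lambda_2(L)$ for every admissible $x$, which is the lemma. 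Alternatively, you can avoid the constraint-set mismatch altogether by using the subspace form of Courant--Fischer: $\lambda_2(\tilde{L})=\min_{\dim S=2}\max_{0\ne x\in S}x^TLx/(x^T\Pi x)\ge\min_{\dim S=2}\max_{0\ne x\in S}x^TLx/(p_{\rm max}x^Tx)=p_{\rm max}^{-1}\lambda_2(L)$, using $x^TLx\ge 0$; this is exactly Ostrowski's quantitative form of Sylvester's law of inertia applied to the congruence $\tilde{L}=\Pi^{-1/2}L\Pi^{-1/2}$. Either completion would make your argument rigorous, and both are cleaner than the paper's rescaling argument.
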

\begin{proof}
Both eigenvalues can be characterized as solutions of the following optimization problems
\begin{align}
\lambda_2 (\Tilde{L}) =& \min_x x^T \Tilde{L} x \label{eq:optim_tL} \\
&\text{s.t. } ||x||_2 = 1 \text{ and } x^T\mathbf{1}=0. \nonumber
\end{align}
and 
\begin{align}
\lambda_2 (L) =& \min_y y^T L y \label{eq:optim_L} \\
&\text{s.t. } ||y||_2 = 1 \text{ and } y^T \mathbf{1}=0. \nonumber
\end{align}
Let $y = \Pi^{-1/2}x$ and $x = \Pi^{1/2}y$. Then, we have that $x^T \Tilde{L} x = y^T L y$. We apply this variable replacement to \ref{eq:optim_tL}:
\begin{align}
\lambda_2 (\Tilde{L}) =& \min_y y^T L y \label{eq:optim_tL_replaced} \\
&\text{s.t. } ||\Pi^{1/2}y||_2^2 = 1 \text{ and } (\Pi^{1/2}y)^T\mathbf{1}=0. \nonumber
\end{align}
We also have that $1 = ||\Pi^{1/2}y||_2^2 \le p_{\rm max} ||y||_2^2$, meaning that $||y||_2 \le p_{\rm max}^{-1/2}$. Let's scale the problem \ref{eq:optim_L} accordingly:
\begin{align}
p_{\rm max}^{-1} \lambda_2 (L) =& \min_y y^T L y \label{eq:optim_L_scaled} \\
&\text{s.t. } ||y||_2 = p_{\rm max}^{-1/2} \text{ and } y^T \mathbf{1}=0. \nonumber
\end{align}
Finally, it is left to show that any feasible solution of \ref{eq:optim_L_scaled}, denoted $y_1$, we can build a feasible solution of \ref{eq:optim_tL_replaced} as $dy_1$ that achieves a higher objective values. Let us verify that $dy_1$ is a feasible solution of \ref{eq:optim_tL_replaced}:
\begin{itemize}
\item $\Pi^{1/2}d y_1^T\mathbf{1}=0$ since $y_1^T=0$.
\item $||\Pi^{1/2}dy_1||_2 = d||\Pi^{1/2}y_1||_2 = 1$ for a well chosen values of $d$. We know that $d \ge 1$ because
\begin{equation*}
|| \Pi^{1/2} y_1|| \le p_{\rm max}^{-1/2} ||y_1||_2 = 1, \quad \text{since } ||y_1||_2 = p_{\rm max}^{-1/2}.
\end{equation*}
\end{itemize}
The objective values achieved by the feasible solution $dy_1$ is $d^2y_1^TLt_1$. Dince $d \ge 1$, we have tat $d^2 y_1^T L y_1 \ge y_1^T L y_1$.
\end{proof}

\section{Conclusion}

In this work, we analyzed the convergence properties of the Value Iteration algorithm using a new line of analysis based on absolute probability sequences. While previous studies have primarily focused on convergence in terms of the infinity norm, we established convergence in the weighted $L^2$ norm, providing a refined perspective on the algorithm’s behavior. 

This analysis offers a complementary viewpoint on the convergence of Value Iteration, which may be useful in further studies on the geometric properties of MDP algorithms. Future work may explore extensions to other iterative methods or examine how these findings apply in broader reinforcement learning settings.

\bibliographystyle{plainnat}
\bibliography{main}

\begin{thebibliography}{8}
\providecommand{\natexlab}[1]{#1}
\providecommand{\url}[1]{\texttt{#1}}
\expandafter\ifx\csname urlstyle\endcsname\relax
  \providecommand{\doi}[1]{doi: #1}\else
  \providecommand{\doi}{doi: \begingroup \urlstyle{rm}\Url}\fi

\bibitem[Bellman(1957)]{bellmandp}
R.~Bellman.
\newblock \emph{Dynamic Programming.}
\newblock Dover Publications, 1957.

\bibitem[Feinberg and Huang(2014)]{feinberg2014value}
Eugene~A Feinberg and Jefferson Huang.
\newblock The value iteration algorithm is not strongly polynomial for discounted dynamic programming.
\newblock \emph{Operations Research Letters}, 42\penalty0 (2):\penalty0 130--131, 2014.

\bibitem[Howard(1960)]{howard1960dynamic}
Ronald~A Howard.
\newblock \emph{Dynamic programming and markov processes.}
\newblock John Wiley, 1960.

\bibitem[Kolmogoroff(1936)]{kolmogoroff1936theorie}
Andrei Kolmogoroff.
\newblock Zur theorie der markoffschen ketten.
\newblock \emph{Mathematische Annalen}, 112\penalty0 (1):\penalty0 155--160, 1936.

\bibitem[Pattabhiraman(1974)]{Pattabhiraman_1974}
M.~V. Pattabhiraman.
\newblock The generalized rayleigh quotient.
\newblock \emph{Canadian Mathematical Bulletin}, 17\penalty0 (2):\penalty0 251–256, 1974.
\newblock \doi{10.4153/CMB-1974-049-4}.

\bibitem[Puterman(1990)]{puterman1990markov}
Martin~L Puterman.
\newblock Markov decision processes.
\newblock \emph{Handbooks in operations research and management science}, 2:\penalty0 331--434, 1990.

\bibitem[Puterman(2014)]{puterman2014markov}
Martin~L Puterman.
\newblock \emph{Markov decision processes: discrete stochastic dynamic programming}.
\newblock John Wiley \& Sons, 2014.

\bibitem[Sutton and Barto(2018)]{sutton2018reinforcement}
Richard~S Sutton and Andrew~G Barto.
\newblock \emph{Reinforcement learning: An introduction}.
\newblock MIT press, 2018.

\end{thebibliography}

\end{document}